\newtheorem{definition}{Definition}
\newtheorem{lemma}{Lemma}
\newtheorem{property}{Property}
\newtheorem{proposition}{Proposition}
\newcommand{\LNV}{{\sc{\small{Increasing\_Nvalue}}}}
\newcommand{\CH}{{\sc{\small{Change}}}}
\newcommand{\RE}{{\sc{\small{Regular}}}}
\newcommand{\CRE}{{\sc{\small{Cost-Regular}}}}
\newcommand{\SM}{{\sc{\small{Smooth}}}}
\newcommand{\SL}{{\sc{\small{Slide}}}}
\newcommand{\CP}{{\sc{\small{CardPath}}}}
\newcommand{\seqbin}{{\sc{\small{seq\_bin}}}}
\newcommand{\gstretch}{\emph{$\cstrStretch$-stretch}}
\newcommand{\gstretches}{\emph{$\cstrStretch$\nobreakdash-stretches}}
\newcommand{\vars}{{X}}
\newcommand{\ctrs}{{\cal C}}
\newcommand{\doms}{{\cal D}}
\newcommand{\cstrDef}{{C}}
\newcommand{\cstrStretch}{{C}}
\newcommand{\cstrBin}{{B}}
\DeclareMathOperator{\sumD}{\Sigma_{Di}}
\newtheorem{notation}{Notation}
\title{A Generalized Arc-Consistency Algorithm for a Class of Counting Constraints: Revised Edition that Incorporates One Correction \\ {\footnotesize This revised edition of the IJCAI'11 paper introduces the \emph{counting-continuous} notion (Def.~\ref{def:cc}), to correct  Prop.~\ref{prop:consistency}.}}
\author{Thierry Petit \and Nicolas Beldiceanu \and Xavier Lorca \\
Mines-Nantes, LINA UMR CNRS 6241, \\
  4, rue Alfred Kastler, FR-44307~Nantes, France. \\
  \{Thierry.Petit, Nicolas.Beldiceanu, Xavier.Lorca\}@mines-nantes.fr} 
\begin{document}
\maketitle

%%%%%%%%%%%%%%%%%%%%%%%%%%%%%%%%%%%%%%%%%%%%
\begin{abstract} 
%{\color{blue}[Modifi\'e]} 
This paper introduces the \seqbin~meta-constraint with a polytime algorithm 
achieving generalized arc\nobreakdash-consistency according to some properties. \seqbin~can be used for encoding counting 
constraints such as \CH, \SM~or~\LNV. 
For some of these constraints and some of their variants GAC can be enforced with a time and space complexity linear in the sum of domain sizes, which 
improves or equals the best known results of the literature.
\end{abstract}

%%%%%%%%%%%%%%%%%%%%%%%%%%%%%%%%%%%%%%%%%%
%%%%%%%%%%%%%%%%%%%%%%%%%%%%%%%%%%%%%%%%%%
\section{Introduction}

Many constraints are such that a \emph{counting} variable is equal to the number of times a given property is satisfied in a sequence of variables. To represent some of these constraints in a generic way, we introduce the \seqbin$(N,X,\cstrStretch,\cstrBin)$ meta-constraint, where $N$ is an integer variable, $X$ is a sequence of integer variables and $\cstrStretch$ and $\cstrBin$ are two binary constraints. 

Based on the notion $\cstrStretch$-stretch, a generalization of stretch~\cite{Pesant.CP.2001} where the equality constraint is made explicit and is replaced by $\cstrStretch$, \seqbin{} holds if and only if two conditions are both satisfied: (1)~$N$ is equal to  the number of 
$\cstrStretch$-stretches in the sequence $X$, and (2)~$\cstrBin$ holds on any pair of consecutive variables in $X$.

Among the constraints that can be expressed thanks to \seqbin, many were introduced for solving real-world problems, \emph{e.g.},  \CH~\cite{Cosytec97} (time tabling problems), \SM~\cite{beldiceanu10} (time tabling and scheduling), or \LNV~\cite{BeldiceanuHermenierLorcaPetit10} (symmetry breaking for resource allocation problems). 

The main contribution of this paper is a generic polytime filtering algorithm for~\seqbin{}, which achieves generalized arc\nobreakdash-consistency (GAC) 
according to some conditions on $B$ and $C$. 
%when the constraint $\cstrBin$ is \emph{monotonic}~\cite{vandevten92}. 
This algorithm can be seen as a generalization of the \LNV~filtering algorithm~\cite{BeldiceanuHermenierLorcaPetit10}. 
Given $n$ the size of $X$, $d$ the maximum domain size, and $\sumD$ the sum of domain sizes, 
%this algorithm has a space complexity in $O(\sumD)$ and a time complexity in $O(nd^2)$. 
%
we characterize properties on $\cstrStretch$ and $\cstrBin$ which lead to a time and space complexity in $O(\sumD)$. 
These properties are satisfied when \seqbin~represents~
%\SM, 
\LNV, and several variants of~\CH~(provided its parameter is a monotonic binary constraint, \emph{e.g.}, '$\leq$', '$<$', '$\geq$', '$>$').  For these constraints, our technique improves or equals the best known results.

Section~\ref{prelem} provides the definitions used in this paper.
Section~\ref{def:constraintClass} defines \seqbin{} and shows how to express well-known constraints with \seqbin. 
Section~\ref{sec:theory} provides a necessary and sufficient condition for achieving GAC.
Section~\ref{sec:algos} details the corresponding GAC filtering algorithm.
Finally, Section~\ref{relatedWorks} discusses about related works and Section~\ref{conclusion} concludes.

%%%%%%%%%%%%%%%%%%%%%%%%%%%%%%%%%%%%%%%%%%
%%%%%%%%%%%%%%%%%%%%%%%%%%%%%%%%%%%%%%%%%%
\section{Background} \label{prelem}
A {\em Constraint Network} is defined by a sequence of variables $\vars = [x_0, x_1, \ldots,x_{n-1}]$, a sequence of domains $\doms$, where each $D(x_i) \in \doms$ is the finite set of values that variable $x_i$ can take, and a set of constraints $\ctrs$ that specifies the allowed combinations of values for given subsets of variables. $\min(x)$ (resp. $\max(x)$) is the minimum (resp. maximum) value of $D(x)$. 
A sequence of variables $X'=[x_i,x_{i+1},\ldots,x_j]$, $0 \leq i \leq j \leq n-1$ (resp. $i > 0$ or $i < n-1$), is a \emph{subsequence} (resp. a \emph{strict subsequence}) of $X$ and is denoted by $X' \subseteq X$ (resp. $X' \subset X$). 
%The sum of domains sizes over $\doms$ is $\sumD = \sum_{x_i \in X}|D(x_i)|$. 
$A[\vars]$ denotes an assignment of values to variables in $\vars$. Given $x \in X$, $A[x]$ is the value of $x$ in $A[\vars]$. $A[\vars]$ is \emph{valid} if and only if $\forall x_i \in \vars$, $A[x_i] \in D(x_i)$. An \emph{instantiation} $I[\vars]$ is a valid assignment of $\vars$. Given $x \in \vars$, $I[x]$ is the value of $x$ in $I[\vars]$. Given the sequence $\vars$ and $i$, $j$ two integers such that $0\leq i\leq j\leq n-1$, $I[x_i,\ldots,x_j]$ is the projection of $I[\vars]$ on $[x_i,x_{i+1},,\ldots,x_j]$.
A \emph{constraint} $\cstrDef(\vars) \in \ctrs$ specifies the allowed combinations of values for $\vars$. We also use the simple notation $\cstrDef$. $\cstrDef(\vars)$ defines a subset ${\cal R}_\cstrDef({\cal D})$ of the cartesian product of the domains  $\Pi_{x_i\in X}D(x_i)$. If $\vars$ is a pair of variables, then $\cstrDef(\vars)$ is \emph{binary}. We denote by $v \cstrDef w$ a pair of values $(v,w)$ that satisfies a binary constraint $\cstrDef$. 
$\neg \cstrDef$ is the \emph{opposite} of $\cstrDef$, that is, $\neg \cstrDef$ defines the relation ${\cal R}_{\neg \cstrDef}({\cal D})$ = $\Pi_{x_i\in X}D(x_i)$ $\setminus$ ${\cal R}_\cstrDef({\cal D})$. 
A \emph{feasible instantiation} $I[\vars]$ of  $\cstrDef[\vars]$ is an instantiation which is in ${\cal R}_\cstrDef({\cal D})$. We say that $I[\vars]$ \emph{satisfies} $\cstrDef(\vars)$, or that $I[\vars]$ is a {\em support} on $\cstrDef(\vars)$. 
Otherwise, $I[\vars]$ \emph{violates} $\cstrDef(\vars)$. If $C$ is a binary constraint on $X=\{x_i, x_{i+1}\}$ and $v \in D(x_i)$ then the set of supports such that $x_i = v$ can be considered as a set of values (a subset of $D(x_{i+1}))$. 
A \emph{solution} of a constraint network is an instantiation of all the variables satisfying all the constraints.

Value $v \in D(x_i)$, $x_i \in X$, is (generalized) {\em arc\--consistent} (GAC) with respect to~$\cstrDef(\vars)$ if and only if $v$ belongs to a support of $C(\vars)$.  A domain $D(x_i)$, $x_i \in X$, is GAC with respect to~$\cstrDef(\vars)$ if and only if $\forall v\in D(x_i)$, $v$ is GAC with respect to~$\cstrDef(\vars)$. $\cstrDef(\vars)$ is GAC if and only if $\forall x_i\in X$, $D(x_i)$ is GAC with respect to~$\cstrDef(\vars)$.
A constraint network is GAC if and only if it is closed for \emph{GAC}~\cite{Bessiere06}: $\forall x_i \in \vars$ all values in $D(x_i)$ that are not {GAC} with respect to a constraint in $\mathcal{C}$ have been removed.

%%%%%%%%%%%%%%%%%%%%%%%%%%%%%%%%%%%%%%%%%%
%%%%%%%%%%%%%%%%%%%%%%%%%%%%%%%%%%%%%%%%%%
\section{The {\sc\small SEQ\_BIN}~Meta-Constraint} \label{def:constraintClass}

%%%%%%%%%%%%%%
%\subsection{Definition} 
We first generalize the notion of \emph{stretches}~\cite{Pesant.CP.2001} to characterize a sequence of consecutive variables where the same binary constraint is satisfied. 
%To do so, we first introduce the notion of \emph{unary-true} and \emph{unary-false} constraint.
%\begin{definition}
%Given a constraint $\cstrDef(\vars)$, $X=[x_0, x_1,$ $\ldots, x_{n-1}]$, if when $\vars=[x_0]$, for any value $v$ $\cstrDef(v)$ is true (resp. false), then $\cstrDef$ is \emph{unary-true} (resp. \emph{unary-false}). 
%\end{definition}
%%To do so, we say that a constraint $C(X)$ that can be interpreted on a singleton $X = \{x\}$ as the universal constraint (resp. the complementary of the universal constraint) is \emph{unary-true} (resp. \emph{unary-false}). 
%For instance, the binary constraint '$=$' is unary-true. Conversely, the binary constraint '$\neq$' is unary-false.  
%The $n$-ary constraint $[\exists x_i \in X$ such that $x_i = 0]$ is neither unary-true nor unary-false: its interpretation when $X=[x_0]$ can be either true or false. 

\begin{definition}[\gstretch]\label{def:c-stretch}
Let $I[X]$ be an instantiation of the variable sequence $X = [x_0,x_1,\ldots,x_{n-1}]$ and  \cstrStretch{} a binary constraint. The \cstrStretch-sequence constraint $\mathcal{C}(I[X],\cstrStretch)$ 
holds if and only if: 
\begin{itemize}
\item Either $n=1$, 
\item or $n>1$ and $\forall k \in [0, n-2]$~$\cstrStretch(I[x_k],I[x_{k+1}])$ holds. 
\end{itemize}
A \gstretch{} of $I[X]$ is a subsequence $X'$ $\subseteq$ $X$ such that the two following conditions are both satisfied:
\begin{enumerate}
	\item The \cstrStretch-sequence $\mathcal{C}(I[X'],\cstrStretch)$ holds,
	\item $\forall X"$ such that $X' \subset X" \subseteq X$ the \cstrStretch-sequence $\mathcal{C}(I[X"],\cstrStretch)$ does not hold.
\end{enumerate}
\end{definition}

The intuition behind Definition~\ref{def:c-stretch} is to consider the maximum length subsequences  
where the binary constraint $\cstrStretch$ is satisfied between consecutive variables.
Thanks to this generalized definition of stretches we can now introduce \seqbin. 
\begin{definition}\label{def:ctr}
The meta-constraint \seqbin$(N,X,\cstrStretch,\cstrBin)$ is defined by a variable $N$,  a sequence of $n$ variables $X = [x_0,x_1,\ldots,x_{n-1}]$ and two binary constraints $\cstrStretch$ and $\cstrBin$. Given an instantiation $I[N, x_0,x_1,\ldots,x_{n-1}]$, \seqbin$(N,X,\cstrStretch,\cstrBin)$ is satisfied if and only if for any $ i\in[0,n-2]$,  $I[x_i]$\,$\cstrBin$\,$I[x_{i+1}]$ holds, and $I[N]$ is equal to the number of \gstretches{} in $I[X]$.
\end{definition}

%%%%%%%%%%%%%%
%We now consider some use cases of \seqbin. 
The constraint \CH~was introduced in the context of timetabling problems~\cite{Cosytec97}, in order to put an upper limit on the number of changes of job types during a given period. The relation between classical stretches and \CH~was initially stressed in~\cite[page 64]{Hellsten04}.
\emph{\CH}  is defined on a variable $N$, a sequence of variables $X = [x_0,x_1,\ldots,x_{n-1}]$, and a binary constraint $C \in \{=,\neq,<,>,\leq,\geq\}$. It is satisfied if and only if $N$ is equal to the number of times the constraint $C$ holds on consecutive variables of $X$.
Without hindering propagation (the constraint network is Berge\--acyclic),  \CH~can be reformulated as \seqbin$(N',X,\neg C,$\texttt{true}$) \wedge [N = N'-1]$, where \texttt{true} is the universal constraint.

%\A~\cite{Cosytec97,beshebhnikizwal05} was initially introduced in the context of car sequencing. It is defined on a variable $N$, a sequence of variables $X = [x_0,x_1,\ldots,x_{n-1}]$ and a set of values ${\cal V}$. Given an instantiation, \emph{\A$(N,X,{\cal V})$} is satisfied if and only if $N$ is equal to the number of variables $x_i \in X$ such that $I[x_i] \in {\cal V}$. 
%
%To represent \A{} with \seqbin, we introduce a constraint to express the operator $\notin$.
%Given $X=[x_i, x_{i+1}]$, a set of values ${\cal V}$ and an instantiation $I[X]$, the constraint \notinbin$(I[X], {\cal V})$ is satisfied if and only if $I[x_i] \notin {\cal V}$. 
%
%Then, we can reformulate the constraint \A{} by \seqbin$(N',X,$\notinbin$({\cal V}),$\texttt{true}$) \wedge [N' = N-1]$, where \texttt{true} is the universal constraint. 

\SM$(N,X)$ is a variant of \CH$(N,X,C)$, where $x_i$\,$C$\,$x_{i+1}$ is defined by $|x_i-x_{i+1}| > \mathit{cst}$, $\mathit{cst} \in \mathbb{N}$. It is useful to limit  the number of drastic variations on a cumulative profile~\cite{beldiceanu10,dec10}.

As a last example, consider the \LNV~constraint, which is a specialized version of {\sc NValue}~\cite{Pachet99}. It was introduced for breaking variable symmetry in the context of resource allocation problems~\cite{BeldiceanuHermenierLorcaPetit10}. 
\LNV~is defined on a variable $N$ and on a sequence of variables $X =[x_0,x_1,\ldots,x_{n-1}]$. Given an instantiation, \emph{\LNV$(N,X)$} is satisfied if and only if $N$ is equal to the number of distinct values assigned to variables in $X$, and for any $i\in[0,n-2]$, $x_i\leq x_{i+1}$.  
We reformulate \LNV$(N,X)${} as \seqbin$(N,X,=,\leq)$. 

%%%%%%%%%%%%%%%%%%%%%%%%%%%%%%%%%%%%%%%%%%
\section{Consistency of {\sc\small SEQ\_BIN}}\label{sec:theory}
We first present how to compute, for any value in a given domain of a variable $x_i  \in X$, the minimum and maximum number of $C$-stretches within the suffix of $X$ starting at $x_i$  (resp. the prefix of $X$ ending at $x_i$) satisfying a chain of binary constraints of type $\cstrBin$. Then, we introduce several properties useful to obtain a feasibility condition for \seqbin, and a necessary and sufficient condition for filtering which leads to the GAC filtering algorithm presented in Section~\ref{sec:algos}.

\subsection{Computing of the Number of $C$\--stretches} \label{computeCStretches}
According to Definition~\ref{def:ctr}, we have to ensure that the chain of $\cstrBin$ constraints are satisfied  along the sequence of variables $X = [x_0, x_1,\ldots, x_{n-1}]$. An instantiation $I[X]$ is said \emph{$\cstrBin$\nobreakdash-coherent} if and only if either $n=1$ or for any $i \in [0,n-2]$, we have $I[x_i]$\,$\cstrBin$\,$I[x_{i+1}]$. A value $v\in D(x_i)$ is said to be \emph{$\cstrBin$\nobreakdash-coherent} with respect to $x_i$ if and only if it can be part of at least one $\cstrBin$-coherent instantiation. Then, given an integer $i \in [0, n-2]$, if $v \in D(x_i)$ is $\cstrBin$-coherent \emph{with respect to} $x_i$ then there exists $w \in D(x_{i+1})$ such that $v\,\cstrBin\,w$. 

Consequently, within a given domain $D(x_i)$, values that are not $\cstrBin$\nobreakdash-coherent can be removed since they cannot be part of any solution of \seqbin. Our aim is now to compute for each $\cstrBin$\--coherent value $v$ in the domain of any variable $x_i$ the minimum and maximum number of \gstretches{} on $X$. 
\begin{notation}
$\underline{s}(x_{i},v)$ (resp. $\overline{s}(x_{i},v)$) is the minimum (resp. maximum) number of $C$-stretches within the sequence of variables $[x_i, x_{i+1}, \ldots, x_{n-1}]$ under the hypothesis that $x_i = v$. $\underline{p}(x_{i},v)$ (resp. $\overline{p}(x_{i},v)$) is the minimum (resp. maximum) number of $C$-stretches within the sequence $[x_0, x_1, \ldots, x_{i}]$ under the hypothesis that $x_i = v$. 
Given $X =[x_0,x_1,\ldots,x_{n-1}]$, $\underline{s}(X)$ (resp. $\overline{s}(X)$) denotes the minimum (resp. maximum) value of $\underline{s}(x_0,v)$ (resp. $\overline{s}(x_0,v)$). 
\end{notation}

\begin{lemma}\label{lem:Bmin}
Given \seqbin$(N,X,\cstrStretch,\cstrBin)$ with $X=[x_0,x_1,\ldots,x_{n-1}]$, assume the domains in $X$ contain only $\cstrBin$\nobreakdash-coherent values. 
Given $i \in [0, n-1]$ and $v \in D(x_i)$,
\begin{itemize}
\item If $i = n-1$:  $\underline{s}(x_{n-1},v) = 1$.
\item {Else: \footnotesize
$$
\underline{s}(x_i,v)= \min_{w \in D(x_{i+1})}
\begin{pmatrix}
	\min_{[v \cstrBin w] \wedge [v \cstrStretch w]}(\underline{s}(x_{i+1},w)), \\
	\min_{[v \cstrBin w] \wedge [v \neg \cstrStretch w] }(\underline{s}(x_{i+1},w))+1
\end{pmatrix}
$$
\normalsize
}
\end{itemize}
\end{lemma}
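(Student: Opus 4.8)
The plan is to prove the recurrence by downward induction on $i$, i.e.\ by induction on the length $n-i$ of the suffix $[x_i,\ldots,x_{n-1}]$. The base case $i=n-1$ is immediate: the sequence $[x_{n-1}]$ has length one, so by Definition~\ref{def:c-stretch} the $\cstrStretch$\nobreakdash-sequence $\mathcal{C}(I[x_{n-1}],\cstrStretch)$ trivially holds and $[x_{n-1}]$ is the unique \gstretch{}, for every $v\in D(x_{n-1})$; hence $\underline{s}(x_{n-1},v)=1$.

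For the inductive step the key structural observation I would isolate first is how the \gstretches{} of a suffix relate to those of the next shorter suffix. Fix $i\le n-2$, $v\in D(x_i)$, and any $\cstrBin$\nobreakdash-coherent instantiation $I$ of $[x_i,\ldots,x_{n-1}]$ with $I[x_i]=v$; put $w=I[x_{i+1}]$, so $v\,\cstrBin\,w$. If $v\,\cstrStretch\,w$, the \gstretch{} of $I[x_i,\ldots,x_{n-1}]$ containing $x_i$ is exactly $[x_i]$ prepended to the \gstretch{} of $I[x_{i+1},\ldots,x_{n-1}]$ containing $x_{i+1}$ (prepending $x_i$ does not move the right endpoint of that run, since the extent of the run to the right is governed only by the values at positions $\ge i+1$, and maximality to the left is automatic), while every other \gstretch{} is unchanged; hence the number of \gstretches{} of $I[x_i,\ldots,x_{n-1}]$ equals that of $I[x_{i+1},\ldots,x_{n-1}]$. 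If instead $v\,\neg\cstrStretch\,w$, then $[x_i]$ is a \gstretch{} on its own (it cannot be extended to the right) and all remaining \gstretches{} are those of $I[x_{i+1},\ldots,x_{n-1}]$, so the count is exactly one larger. Equivalently, one may note that the number of \gstretches{} of any instantiation is one plus the number of indices $k$ at which $\neg\cstrStretch$ holds between $x_k$ and $x_{k+1}$, and then simply track the contribution of the index $i$.

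Given this observation, I would take the minimum of the \gstretch{} count over all $\cstrBin$\nobreakdash-coherent instantiations of $[x_i,\ldots,x_{n-1}]$ with $x_i=v$. Such an instantiation amounts to a choice of $w\in D(x_{i+1})$ with $v\,\cstrBin\,w$ together with a $\cstrBin$\nobreakdash-coherent instantiation of $[x_{i+1},\ldots,x_{n-1}]$ with $x_{i+1}=w$, and conversely any such pair glues to a $\cstrBin$\nobreakdash-coherent instantiation of $[x_i,\ldots,x_{n-1}]$ since the only new required constraint is $v\,\cstrBin\,w$. Minimising the per\nobreakdash-instantiation count supplied by the structural observation, first over the tail and then over $w$, and replacing the tail minimum by $\underline{s}(x_{i+1},w)$ via the induction hypothesis, yields precisely the displayed formula, the two rows of the matrix corresponding to the cases $v\,\cstrStretch\,w$ and $v\,\neg\cstrStretch\,w$.

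What needs attention is well\nobreakdash-definedness rather than real difficulty: since every domain contains only $\cstrBin$\nobreakdash-coherent values and $v\in D(x_i)$ with $i\le n-2$, there is some $w\in D(x_{i+1})$ with $v\,\cstrBin\,w$, so the outer minimum ranges over a non\nobreakdash-empty set; and $\cstrBin$\nobreakdash-coherence of $w$ with respect to $x_{i+1}$ ensures that the set of $\cstrBin$\nobreakdash-coherent instantiations of $[x_{i+1},\ldots,x_{n-1}]$ with $x_{i+1}=w$ is non\nobreakdash-empty, so $\underline{s}(x_{i+1},w)$ is defined and the induction hypothesis applies. I expect the only genuinely delicate point to be stating and verifying the \gstretch{} decomposition cleanly — in particular checking that maximality in Definition~\ref{def:c-stretch} is preserved when a variable is added to the left of a run — after which the optimisation bookkeeping is routine; the symmetric statement for $\underline{p}$ follows from the mirror-image argument (induction on $i$ from the left, using $\cstrBin$ between $x_{i-1}$ and $x_i$).
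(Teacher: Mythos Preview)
Your proposal is correct and follows essentially the same approach as the paper: downward induction on $i$, with the inductive step driven by the observation that prepending $x_i=v$ to a $\cstrBin$\nobreakdash-coherent tail starting at $x_{i+1}=w$ leaves the \gstretch{} count unchanged when $v\,\cstrStretch\,w$ and increases it by one otherwise, followed by minimising over $w$. Your treatment is in fact more explicit than the paper's on the bijection between instantiations of $[x_i,\ldots,x_{n-1}]$ and pairs $(w,\text{tail})$, and on the non\nobreakdash-emptiness of the outer minimum, but the underlying argument is the same.
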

\begin{proof}
By induction. From Definition~\ref{def:c-stretch}, for any $v\in D(x_{n-1})$, we have $\underline{s}(x_{n-1},v)=1$ (i.e., a \gstretch{} of length $1$). Consider now $x_i \in X$ with  $i<n-1$, and a value $v\in D(x_i)$. Consider the set of instantiations 
$I[x_{i+1},x_{i+2},\ldots,x_{n-1}]$ that are $\cstrBin$-coherent, and that minimize the number of \gstretches{} in $[x_{i+1},x_{i+2},\ldots,x_{n-1}]$. We denote this minimum number of \gstretches{} by $\mathit{mins}$. At least one $\cstrBin$-coherent instantiation exists since all values in the domains of $[x_{i+1},x_{i+2},\ldots,x_{n-1}]$ are $\cstrBin$-coherent. For each such instantiation, let us denote by $w$ the value associated with $I[x_{i+1}]$.
Either there exists such an instantiation with $\mathit{mins}$ \gstretches{} with the conjunction $\cstrBin \wedge \cstrStretch$ satisfied by $(I[x_i],I[x_{i+1}])$. Then, $\underline{s}(x_i,v) =\underline{s}(x_{i+1},w)$ since the first \gstretch{} of $I[x_{i+1},x_{i+2},\ldots,x_{n-1}]$ is extended when augmenting $I[x_{i+1},x_{i+2},\ldots,x_{n-1}]$ with value $v$ for $x_i$. 
Or all instantiations $I[x_{i+1},x_{i+2},\ldots,x_{n-1}]$ with $\mathit{mins}$ \gstretches{} are such that $\cstrStretch$ is violated by $(I[x_i],I[x_{i+1}])$: $(I[x_i],I[x_{i+1}])$ satisfies $\cstrBin \wedge \neg\cstrStretch$. By construction, any instantiation $I[x_{i},x_{i+1},\ldots,x_{n-1}]$ with $I[x_i] = v$ has a number of \gstretches{} strictly greater than $\mathit{mins}$. Consequently, given $I[x_{i+1}, x_{i+2},\ldots,x_{n-1}]$ with $\mathit{mins}$ \gstretches{}, the number of \gstretches{} obtained by augmenting this instantiation with value $v$ for $x_i$ is exactly $\mathit{mins} + 1$. 
\end{proof}
\begin{lemma}\label{lem:Bmax}
Given \seqbin$(N,X,\cstrStretch,\cstrBin)$ with $X=[x_0,x_1,\ldots,x_{n-1}]$, assume the domains in $X$ contain only $\cstrBin$\nobreakdash-coherent values. 
Given $i \in [0, n-1]$ and $v \in D(x_i)$: 
\begin{itemize}
\item If $i = n-1$: $\overline{s}(x_{n-1},v) = 1$.  
\item {Else: \footnotesize
$$
\overline{s}(x_i,v)= \max_{w \in D(x_{i+1})}
\begin{pmatrix}
	\max_{[v \cstrBin w] \wedge [v \cstrStretch w]}(\overline{s}(x_{i+1},w)), \\
	\max_{[v \cstrBin w] \wedge [v \neg \cstrStretch w]}(\overline{s}(x_{i+1},w))+1
\end{pmatrix}
$$
\normalsize
}
\end{itemize}
\end{lemma}

Given a sequence of variables $[x_0,x_1,\ldots,x_{n-1}]$ such that their domains contain only $\cstrBin$-coherent values, for any $x_i$ in the sequence and any $v \in D(x_i)$, computing $\underline{p}(x_i,v)$ (resp. $\overline{p}(x_i,v)$) is symmetrical to $\underline{s}(x_i,v)$ (resp. $\overline{s}(x_i,v)$). We substitute $\min$ by $\max$ (resp. $\max$ by $\min$), $x_{i+1}$ by $x_{i-1}$, and $vRw$ by $wRv$ for any $R \in \{\cstrBin, \cstrStretch, \neg \cstrStretch\}$.

%%%%%%%%%%%%%%%%%%%%%%%%%%%%%%%%%%%%%%%%%%%%
\subsection{Properties on the Number of $C$\--stretches} \label{propCStretches}
This section provides the properties linking the values in a domain $D(x_i)$ with the minimum and maximum number of $C$\nobreakdash-stretches in $X$. We  consider only $\cstrBin$\nobreakdash-coherent values, which may be part of a feasible instantiation of \seqbin{}.
Next property is a direct consequence of Lemmas~\ref{lem:Bmin} and~\ref{lem:Bmax}. 
\begin{property}\label{property:minleqmax}
For any $\cstrBin$-coherent value $v$ in $D(x_i)$, with respect to $x_i$, $\underline{s}(x_i,v)\leq\overline{s}(x_i,v) $. 
\end{property}
\begin{property}\label{prop:minmax}
Consider \seqbin$(N,X,\cstrStretch,\cstrBin)$, a variable $x_i \in X$ ($0 \leq i \leq n-1$), and two $\cstrBin$-coherent values $v_1,v_2 \in D(x_i)$. 
If $i=n-1$ or if there exists a $\cstrBin$-coherent $w \in D(x_{i+1})$ such that $v_1 \cstrBin w$ and $v_2 \cstrBin w$, then $\overline{s}(x_i,v_1) +1 \geq \underline{s}(x_i,v_2)$.  
\end{property}
\begin{proof} 
Obviously, if $i=n-1$. If $v_1 = v_2$, by Property~\ref{property:minleqmax} the property holds. Otherwise, assume there exist two values $v_1$ and $v_2$ such that $\exists w \in D(x_{i+1})$ for which $v_1 \cstrBin w$ and $v_2 \cstrBin w$, and $\overline{s}(x_i,v_1) +1 < \underline{s}(x_i,v_2)$ (hypothesis~$H$). 
By Lemma~\ref{lem:Bmax}, $\overline{s}(x_i,v_1) \geq \overline{s}(x_{i+1},w)$. 
By Lemma~\ref{lem:Bmin},  $\underline{s}(x_i,v_2) \leq \underline{s}(x_{i+1},w)+1$. 
From hypothesis $H$, this entails $\overline{s}(x_{i+1},w)+1 < \underline{s}(x_{i+1},w)+1$, which leads to 
$\overline{s}(x_{i+1},w) < \underline{s}(x_{i+1},w)$, which is, by  Property~\ref{property:minleqmax}, 
not possible.  
\end{proof}

\begin{property}\label{prop:nondisjoint}
Consider \seqbin$(N,X,\cstrStretch,\cstrBin)$, a variable $x_i \in X$ ($0 \leq i \leq n-1$), and two $\cstrBin$-coherent values $v_1,v_2 \in D(x_i)$. 
If either $i = n-1$ or there exists $\cstrBin$-coherent $w \in D(x_{i+1})$ such that $v_1\,\cstrBin\,w$ and $v_2\,\cstrBin\,w$ then, for any $k \in [\min(\underline{s}(x_i,v_1),\underline{s}(x_i,v_2)),\max(\overline{s}(x_i,v_1),\overline{s}(x_i,v_2))]$, either $k \in [\underline{s}(x_i,v_1),\overline{s}(x_i,v_1)]$ or $k \in [\underline{s}(x_i,v_2),\overline{s}(x_i,v_2)]$.
\end{property}
\begin{proof} Obviously, if $i=n-1$ or $v_1 = v_2$ .
 If  $[\underline{s}(x_i,v_1),\overline{s}(x_i,v_1)] \cap [\underline{s}(x_i,v_2),\overline{s}(x_i,v_2)]$ is not empty, then the property holds. Assume $[\underline{s}(x_i,v_1),\overline{s}(x_i,v_1)]$ and $[\underline{s}(x_i,v_2),\overline{s}(x_i,v_2)]$ are disjoint. W.l.o.g., assume $\overline{s}(x_i,v_1) < \underline{s}(x_i,v_2)$. By Property~\ref{prop:minmax}, $\overline{s}(x_i,v_1) +1 \geq \underline{s}(x_i,v_2)$, thus $\overline{s}(x_i,v_1) = \underline{s}(x_i,v_2)-1$. Either $k \in [\underline{s}(x_i,v_1),\overline{s}(x_i,v_1)]$ or $k \in [\underline{s}(x_i,v_2),\overline{s}(x_i,v_2)]$ (there is no hole in the range formed by the union of these intervals). 
\end{proof}

%%%%%%%%%%%%%%%%%%%%%%%%%%%%%%%%%%%%%
\subsection{Properties on Binary Constraints} \label{neighborSubst}
%\notexl{Comparer avec la notion de monotonie qui semble plus restrictive}
Property~\ref{prop:nondisjoint} is central for providing a GAC filtering algorithm based on the count,
for each $B$-coherent value in a domain, of the minimum and maximum number of \gstretches~in complete instantiations. 
Given \seqbin$(N,X,\cstrStretch,\cstrBin)$, we focus on binary constraints $B$ which guarantee that Property~\ref{prop:nondisjoint} holds. 

\begin{definition}\emph{\cite{vandevten92}}~\label{def:subst}
A binary constraint $F$ is \emph{monotonic} if and only if there exists a total ordering $\prec$ of values in domains such that: for any value $v$ and any value $w$,  
$v F w$ holds implies $v' F w'$ holds for all valid tuple such that $v' {\prec}$ $v$ and $w {\prec}$ $w'$. 
\end{definition}

Binary constraints $<$, $>$, $\leq$ and $\geq$ are monotonic, as well as the universal constraint \texttt{true}. % and any binary constraint that corresponds to one of these two constraints thanks to a renaming of values in domains.

\begin{property}\label{prop:subst}
Consider \seqbin$(N,X,\cstrStretch,\cstrBin)$ such that all non $\cstrBin$-coherent values have been removed from domains of variables in $X$. $\cstrBin$~is monotonic if and only if
for any variable $x_i \in \vars$, $0 \leq i < n-1$, for any values $v_1, v_2 \in D(x_i)$,  there exists $w \in D(x_{i+1})$ such that $v_1 \cstrBin w$ and $v_2 \cstrBin w$. 
\end {property}
\begin{proof}
($\Rightarrow$) From Definition~\ref{def:subst} and since we consider only $\cstrBin$-coherent values, each value has at least one support on $\cstrBin$.
Moreover, from Definition~\ref{def:subst}, $\{w \mid v_2 \cstrDef w\} \subseteq \{w \mid v_1 \cstrDef w\}$ or $\{w \mid v_1 \cstrDef w\} \subseteq \{w \mid v_2 \cstrDef w\}$. The property holds.
($\Leftarrow$) Suppose that the second proposition is true (hypothesis H) and  $\cstrBin$~is not monotonic. From Definition~\ref{def:subst}, if $\cstrBin$ is not monotonic then  $\exists v_1$ and $v_2$ in the domain of a variable $x_i \in X$ such that, by considering the constraint $\cstrBin$ on the pair of variables ($x_i, x_{i+1}$), neither $\{w \mid v_2 \cstrDef w\} \subseteq \{w \mid v_1 \cstrDef w\}$ nor $\{w \mid v_1 \cstrDef w\} \subseteq \{w \mid v_2 \cstrDef w\}$. Thus, there exists a support $v_1\cstrBin w$ such that $(v_2,w)$ is not a support on $\cstrBin$, and  a support $v_2\cstrBin w'$ such that $(v_1,w')$ is not a support on $\cstrBin$. We can have $D(x_{i+1}) = \{w,w'\}$, which leads to a contradiction with H. The property holds.  
\end{proof}

%%%%%%%%%%%%%
\subsection{Feasibility} \label{feasibility}
%{\color{blue}[Modifi\'ee]} 
From Property~\ref{prop:subst}, this section provides an equivalence relation between the existence of a solution for \seqbin~and the current variable domains of $X$ and $N$. 
Without loss of generality, in this section we consider that all non $\cstrBin$-coherent values have been removed from domains of variables in $X$. 
First, Definition~\ref{def:ctr} entails the following necessary condition for feasibility. 

\begin{proposition}\label{prop:necessary}
Given \seqbin$(N,X,\cstrStretch,\cstrBin)$, if $\underline{s}(X)>\max(D(N))$ or $\overline{s}(X)<\min(D(N))$ then \seqbin{} fails.
\end{proposition}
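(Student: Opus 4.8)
The plan is to argue by contraposition: assume \seqbin$(N,X,\cstrStretch,\cstrBin)$ has a solution and show that neither $\underline{s}(X)>\max(D(N))$ nor $\overline{s}(X)<\min(D(N))$ can hold. So suppose $I[N,x_0,\ldots,x_{n-1}]$ is a feasible instantiation. By Definition~\ref{def:ctr}, $I[x_0,\ldots,x_{n-1}]$ is $\cstrBin$-coherent and $I[N]$ equals the number of \gstretches{} in $I[X]$; moreover $I[N]\in D(N)$, so $\min(D(N))\leq I[N]\leq\max(D(N))$.

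The key step is to identify the number of \gstretches{} of $I[X]$ with a value of the form $\underline{s}(x_0,v)$ respectively $\overline{s}(x_0,v)$ bracketing it. Write $v=I[x_0]$. The instantiation $I[X]$ is one particular $\cstrBin$-coherent instantiation with $x_0=v$, so by the definition of $\underline{s}(x_0,v)$ (the \emph{minimum} number of $C$-stretches over such instantiations) and $\overline{s}(x_0,v)$ (the \emph{maximum}) we get
$$
\underline{s}(x_0,v)\ \leq\ I[N]\ \leq\ \overline{s}(x_0,v).
$$
Then, since $\underline{s}(X)=\min_{v\in D(x_0)}\underline{s}(x_0,v)$ and $\overline{s}(X)=\max_{v\in D(x_0)}\overline{s}(x_0,v)$ by the Notation preceding Lemma~\ref{lem:Bmin}, we obtain $\underline{s}(X)\leq\underline{s}(x_0,v)\leq I[N]$ and $I[N]\leq\overline{s}(x_0,v)\leq\overline{s}(X)$. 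Chaining with $\min(D(N))\leq I[N]\leq\max(D(N))$ yields $\underline{s}(X)\leq I[N]\leq\max(D(N))$ and $\overline{s}(X)\geq I[N]\geq\min(D(N))$, i.e.\ $\underline{s}(X)\leq\max(D(N))$ and $\overline{s}(X)\geq\min(D(N))$. This contradicts the assumed failure condition, so if $\underline{s}(X)>\max(D(N))$ or $\overline{s}(X)<\min(D(N))$ then no feasible instantiation exists and \seqbin{} fails.

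I expect the only real subtlety to be making precise that $\underline{s}(x_0,v)$ and $\overline{s}(x_0,v)$ as defined by the recurrences in Lemmas~\ref{lem:Bmin} and~\ref{lem:Bmax} genuinely coincide with the min/max number of \gstretches{} over $\cstrBin$-coherent completions with $x_0=v$ — but that equivalence is exactly what the inductive proof of Lemma~\ref{lem:Bmin} (and its analogue Lemma~\ref{lem:Bmax}) establishes, so it may be invoked directly. Everything else is a short monotonicity/bracketing argument, so there is no genuine obstacle; the statement is essentially immediate once the semantics of $\underline{s}(X)$ and $\overline{s}(X)$ are unwound.
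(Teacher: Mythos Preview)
Your argument is correct and matches the paper's approach: the paper does not give a formal proof at all, simply stating that the proposition is entailed by Definition~\ref{def:ctr}, and your contraposition just makes explicit the obvious bracketing $\underline{s}(X)\leq\underline{s}(x_0,I[x_0])\leq I[N]\leq\overline{s}(x_0,I[x_0])\leq\overline{s}(X)$ together with $I[N]\in D(N)$ that the paper leaves implicit.
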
 

$D(N)$ can be restricted to $[\underline{s}(X),\overline{s}(X)]$, but $D(N)$ may have holes or may be strictly included in $[\underline{s}(X),\overline{s}(X)]$.
We have the following proposition. 
%We prove that, provided $\cstrBin$ is monotonic, for any value $k$ in $[\underline{s}(X),\overline{s}(X)]$ there exists a value $v\in D(x_0)$ such that $k \in[\underline{s}(x_0,v),\overline{s}(x_0,v) ]$. 
%Thus, any value for $N$ in $D(N) \cap [\underline{s}(X),\overline{s}(X)]$ is generalized arc-consistent. 

\begin{proposition} \label{consistentK}
Consider~\seqbin$(N,X,\cstrStretch,\cstrBin)$ such that $\cstrBin$ is monotonic, with $X = [x_0,x_1,\ldots,x_{n-1}]$. For any integer $k$ in $[\underline{s}(X), \overline{s}(X)]$ there exists $v$ in $D(x_0)$ such that $k\in[\underline{s}(x_0,v),\overline{s}(x_0,v)]$.
\end{proposition}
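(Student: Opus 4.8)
The plan is to prove the statement by induction on the length $n$ of the sequence $X$, using the recursive characterizations of $\underline{s}$ and $\overline{s}$ from Lemmas~\ref{lem:Bmin} and~\ref{lem:Bmax} together with Property~\ref{prop:nondisjoint} as the combinatorial glue. The base case $n=1$ is immediate, since then $\underline{s}(x_0,v)=\overline{s}(x_0,v)=1$ for every $v\in D(x_0)$, so $[\underline{s}(X),\overline{s}(X)]=\{1\}$ and any $v$ works.

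For the inductive step, consider $X=[x_0,x_1,\ldots,x_{n-1}]$ with $n>1$ and let $X'=[x_1,\ldots,x_{n-1}]$. Since $\cstrBin$ is monotonic and we have removed all non-$\cstrBin$-coherent values, $X'$ also contains only $\cstrBin$-coherent values, so the induction hypothesis applies to $X'$: for every $k'\in[\underline{s}(X'),\overline{s}(X')]$ there is $w\in D(x_1)$ with $k'\in[\underline{s}(x_1,w),\overline{s}(x_1,w)]$. Fix $k\in[\underline{s}(X),\overline{s}(X)]$. The idea is: $\underline{s}(X)=\min_{v}\underline{s}(x_0,v)$ and $\overline{s}(X)=\max_{v}\overline{s}(x_0,v)$, so there exist values $v^-,v^+\in D(x_0)$ with $\underline{s}(x_0,v^-)=\underline{s}(X)\le k$ and $\overline{s}(x_0,v^+)=\overline{s}(X)\ge k$. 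By Property~\ref{prop:subst} (the monotonicity of $\cstrBin$), there is a common $\cstrBin$-support $w\in D(x_1)$ for $v^-$ and $v^+$, so Property~\ref{prop:nondisjoint} applies to the pair $(v^-,v^+)$: since $k$ lies in $[\min(\underline{s}(x_0,v^-),\underline{s}(x_0,v^+)),\max(\overline{s}(x_0,v^-),\overline{s}(x_0,v^+))]$ — which contains $[\underline{s}(X),k]\cup[k,\overline{s}(X)]$ because $\underline{s}(x_0,v^-)=\underline{s}(X)$ and $\overline{s}(x_0,v^+)=\overline{s}(X)$ — we get $k\in[\underline{s}(x_0,v^-),\overline{s}(x_0,v^-)]$ or $k\in[\underline{s}(x_0,v^+),\overline{s}(x_0,v^+)]$. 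Either way we have exhibited a $v\in D(x_0)$ with $k\in[\underline{s}(x_0,v),\overline{s}(x_0,v)]$, which is exactly the claim.

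In fact this argument shows that the induction hypothesis is not even needed for the final statement in the clean form above: Property~\ref{prop:nondisjoint} already lets us "fill the gap" between any value achieving the global minimum and any value achieving the global maximum. The role of monotonicity of $\cstrBin$ is solely to guarantee, via Property~\ref{prop:subst}, that such a pair of values has a common successor in $D(x_1)$ so that Property~\ref{prop:nondisjoint} is applicable; without it the two intervals could be disjoint with a genuine hole between them. The main thing to be careful about is the boundary bookkeeping when $v^-=v^+$ or when $n-1=0$, both of which reduce to Property~\ref{property:minleqmax} and the base case respectively, and making sure the endpoints $\underline{s}(X)$ and $\overline{s}(X)$ are realized by (possibly different) concrete values of $x_0$ so that Property~\ref{prop:nondisjoint}'s hypothesis on the range containing $k$ is met. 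I expect the only real subtlety is stating precisely why $k$ lies in the range $[\min(\underline{s}(x_0,v^-),\underline{s}(x_0,v^+)),\max(\overline{s}(x_0,v^-),\overline{s}(x_0,v^+))]$, which follows because $\underline{s}(x_0,v^-)=\underline{s}(X)\le k\le\overline{s}(X)=\overline{s}(x_0,v^+)$.
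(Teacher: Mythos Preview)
Your proof is correct and, once you discard the unused inductive scaffolding (as you yourself observe), it is essentially identical to the paper's argument: pick $v^-,v^+\in D(x_0)$ realizing $\underline{s}(X)$ and $\overline{s}(X)$, invoke Property~\ref{prop:subst} (monotonicity of $\cstrBin$) to obtain a common $\cstrBin$-support in $D(x_1)$, and apply Property~\ref{prop:nondisjoint}. The paper proceeds exactly this way, just without the induction wrapper.
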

\begin{proof}
Let $v_1 \in D(x_0)$ a value such that $\underline{s}(x_0,v_1)=\underline{s}(X)$.
Let $v_2 \in D(x_0)$ a value such that $\overline{s}(x_0,v_2)=\overline{s}(X)$.  
By Property~\ref{prop:subst}, either $n=1$ or $\exists w \in D(x_{1})$ such that $v_1 \cstrBin w$ and $v_2 \cstrBin w$. 
Thus, from Property~\ref{prop:nondisjoint},
$\forall k \in [ \underline{s}(X),\overline{s}(X)]$, either $k \in [\underline{s}(x_0,v_1),\overline{s}(x_0,v_1)]$ or $k \in [\underline{s}(x_0,v_2),\overline{s}(x_0,v_2)]$. 
\end{proof}
By Proposition~\ref{consistentK}, any value for $N$ in $D(N) \cap [\underline{s}(X),\overline{s}(X)]$ is generalized arc-consistent provided a property is satisfied on the instance of~\seqbin~we consider:  given a variable $x_i$, for any value $v$ in $D(x_i)$ and for all $k \in [\underline{s}(x_i,v),\overline{s}(x_i,v)]$, there exists a solution of ~\seqbin$(N,X,\cstrStretch,\cstrBin)$ with exactly $k$ $C$-stretches.   

\begin{definition}\label{def:cc}
The constraint~\seqbin$(N,X,\cstrStretch,\cstrBin)$ is \emph{counting-continuous} if and only if for any instantiation $I[X]$ with $k$ $C$-stretches, for any variable $x_i \in X$, changing the value of $x_i$ in $I[X]$ leads to a number of $C$-stretches equal either to $k$, or to $k+1$, or to $k-1$.   
\end{definition}

\begin{property}
Consider~\seqbin$(N,X,\cstrStretch,\cstrBin)$ such that $\cstrBin$ is monotonic, with $X = [x_0,x_1,\ldots,x_{n-1}]$, $x_i$ a variable and $v \in D(x_i)$. If~\seqbin$(N,X,\cstrStretch,\cstrBin)$ is counting-continuous then there exists for any integer $k \in[\underline{s}(x_i,v),\overline{s}(x_i,v)]$ an instantiation $I[\{x_i, \ldots, x_{n-1}\}]$ with exactly $k$ $C$-stretches.
\end{property}
\begin{proof} 
By recurrence, we assume that the property is true for all instantiations of $[x_j,\ldots,x_{n-1}]$ such that $j>i$ (the property is obviously true if $j=n-1$). At step $i$, we assume that 
there exists $k \in[\underline{s}(x_i,v),\overline{s}(x_i,v)]$ such that there is no instantiation $I[\{x_i, \ldots, x_{n-1}\}]$ with $k$ $C$-stretches, while~\seqbin$(N,X,\cstrStretch,\cstrBin)$ is counting-continuous  (hypothesis). We prove that this assumption leads to a contradiction. 
By Lemmas~\ref{lem:Bmin} and~\ref{lem:Bmax}, there exists an instantiation $I'[\{x_i, \ldots, x_{n-1}\}]$ with $\underline{s}(x_i,v)$ $C$-streches and $I'[x_i]=v$, and there exists an instantiation $I''[\{x_i, \ldots, x_{n-1}\}]$ with $\overline{s}(x_i,v)$ $C$-stretches and $I''[x_i]=v$. Thus, by hypothesis, $k > \underline{s}(x_i,v)$ and $k < \overline{s}(x_i,v)$. 
We have $\overline{s}(x_i,v) \geq \underline{s}(x_i,v)+2$. By Property~\ref{prop:nondisjoint} and since the property is assumed true for all instantiations of $[x_{i+1},\ldots,x_{n-1}]$, 
there exists at least one pair of values $w_1$, $w_2$ in $D(x_{i+1})$ such that $\underline{s}(x_{i+1},w_2) = \overline{s}(x_{i+1},w_1)+1$, $(v,w_1)$ satisfies $C$ and $(v,w_2)$ violates $C$ (this is the only possible configuration leading to the hypothesis). In this case \seqbin$(N,X,\cstrStretch,\cstrBin)$ is not counting-continuous: Given an instantiation $I[x_i, \ldots, x_{n-1}]$ with $I[x_i]=v$ and $I[x_{i+1}] = w_1$, changing $w_1$ by $w_2$ for $x_{i+1}$ increases the number of $C$-stretches by $2$.
\end{proof}

\begin{proposition}\label{prop:consistency}
Given an instance of~\seqbin$(N,X,\cstrStretch,\cstrBin)$ which is counting-continuous and such that $\cstrBin$ is monotonic, \seqbin$(N,X,\cstrStretch,\cstrBin)$ has a solution if and only if $[\underline{s}(X),\overline{s}(X)]\cap D(N)\neq\emptyset$.
\end{proposition}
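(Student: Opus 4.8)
The plan is to prove the two directions of the equivalence separately, with the forward direction being essentially immediate and the reverse direction being where the real work lies.

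For the ($\Rightarrow$) direction: if \seqbin$(N,X,\cstrStretch,\cstrBin)$ has a solution $I[N,X]$, then $I[X]$ is $\cstrBin$-coherent and $I[N]$ equals the number of $C$-stretches $k$ in $I[X]$. Since $\underline{s}(X)$ and $\overline{s}(X)$ are by definition the minimum and maximum over all $\cstrBin$-coherent instantiations of the number of $C$-stretches, we have $k \in [\underline{s}(X),\overline{s}(X)]$, and $k = I[N] \in D(N)$, so the intersection is nonempty.

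For the ($\Leftarrow$) direction: suppose $[\underline{s}(X),\overline{s}(X)] \cap D(N) \neq \emptyset$ and pick $k$ in this intersection. The goal is to construct a $\cstrBin$-coherent instantiation $I[X]$ with exactly $k$ $C$-stretches; then setting $I[N]=k$ yields a solution. First I would apply Proposition~\ref{consistentK}: since $\cstrBin$ is monotonic and $k \in [\underline{s}(X),\overline{s}(X)]$, there exists $v \in D(x_0)$ with $k \in [\underline{s}(x_0,v),\overline{s}(x_0,v)]$. Then I would invoke the preceding Property (the counting-continuous one applied at $i=0$): since the instance is counting-continuous and $\cstrBin$ is monotonic, there exists an instantiation $I[\{x_0,\ldots,x_{n-1}\}]$ with exactly $k$ $C$-stretches. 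This instantiation is $\cstrBin$-coherent because the computations of $\underline{s}$ and $\overline{s}$ in Lemmas~\ref{lem:Bmin} and~\ref{lem:Bmax} only range over $\cstrBin$-coherent transitions (all non-$\cstrBin$-coherent values having been removed, per the standing assumption of this section). Extending $I$ with $I[N]=k$ gives a solution of \seqbin.

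The main obstacle is really not in the logical structure but in making sure the two hypotheses are genuinely used and correctly chained: monotonicity of $\cstrBin$ is what powers Proposition~\ref{consistentK} (via Property~\ref{prop:subst} and Property~\ref{prop:nondisjoint}), and counting-continuity is exactly what the preceding Property needs to go from "$k$ lies between $\underline{s}(x_0,v)$ and $\overline{s}(x_0,v)$" to "there is an actual instantiation realizing $k$". Without counting-continuity the interval $[\underline{s}(x_0,v),\overline{s}(x_0,v)]$ could have gaps, which is precisely the defect in the original IJCAI'11 statement that this revised edition corrects; I would make this dependence explicit rather than gloss over it. The remaining verification — that the constructed instantiation together with $N=k$ satisfies Definition~\ref{def:ctr} — is then routine.
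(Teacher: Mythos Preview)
Your proposal is correct and follows essentially the same approach as the paper's own proof: both directions are argued identically, with the forward direction immediate from the definitions of $\underline{s}(X)$ and $\overline{s}(X)$, and the reverse direction chaining Proposition~\ref{consistentK} (powered by monotonicity of $\cstrBin$) with the counting-continuity Property to produce a $\cstrBin$-coherent instantiation realizing exactly $k$ \gstretches. Your added commentary on how each hypothesis is consumed is a nice clarification but does not alter the underlying argument.
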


\begin{proof} ($\Rightarrow$) Assume \seqbin$(N,X,\cstrStretch,\cstrBin)$ has a solution. Let $I[\{N\} \cup X]$ be such a solution. 
By Lemmas~\ref{lem:Bmin} and~\ref{lem:Bmax}, the number of $C$-stretches $I[N]$ belongs to $[\underline{s}(X),\overline{s}(X)]$. 
($\Leftarrow$) Let $k\in[\underline{s}(X),\overline{s}(X)]\cap D(N)$ (not empty).
From Proposition~\ref{consistentK}, for any value $k$ in $[\underline{s}(X),\overline{s}(X)]$, $\exists v\in D(x_0)$ such that $k \in[\underline{s}(x_0,v),\overline{s}(x_0,v)]$.
Since ~\seqbin$(N,X,\cstrStretch,\cstrBin)$ is counting-continuous, there exists an instantiation of $X$ with $k$ $C$-stretches. 
By Definition~\ref{def:ctr} and since Lemmas~\ref{lem:Bmin} and~\ref{lem:Bmax} consider only  $\cstrBin$-coherent values, there is a solution of \seqbin$(N,X,\cstrStretch,\cstrBin)$ with $k$ $\cstrStretch$\nobreakdash-stretches. 
\end{proof}

%%%%%%%%%%%%%
\subsection{Necessary and Sufficient Filtering Condition} \label{cns}
%{\color{blue}[Modifi\'ee (la proposition)]} 
Given \seqbin$(N,X,\cstrStretch,\cstrBin)$, Proposition~\ref{prop:consistency} can be used to filter the variable $N$ from variables in $X$. 
Propositions~\ref{prop:necessary} and~\ref{consistentK} ensure that every remaining value in $[\underline{s}(X),\overline{s}(X)]\cap D(N)$ is involved in at least one solution satisfying \seqbin.
We consider now the filtering of variables in $X$. 

\begin{proposition}\label{prop:consistencyprefix}
Given an instance of~\seqbin$(N,X,\cstrStretch,\cstrBin)$ which is counting-continuous and such that $\cstrBin$ is monotonic, let $v$ be a value in $D(x_i)$, $i\in[0,n-1]$. The two following propositions are equivalent: 
\begin{enumerate}
\item{$v$ is $\cstrBin$-coherent and $v$ is GAC with respect to~\seqbin}
\item{
$
\begin{bmatrix}
	\underline{p}(x_i,v)+\underline{s}(x_i,v)-1, \\
	\overline{p}(x_i,v)+\overline{s}(x_i,v)-1
\end{bmatrix}
\cap D(N) \neq \emptyset
$
}
\end{enumerate}
\end{proposition}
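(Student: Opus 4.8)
Fix $v\in D(x_i)$ and assume first that $v$ is $\cstrBin$-coherent. The plan is to reduce both implications to a single structural identity: for every $\cstrBin$-coherent instantiation $I[X]$ with $I[x_i]=v$, the number of \gstretches{} of $I[X]$ equals the number of \gstretches{} of the prefix $I[x_0,\ldots,x_i]$ plus the number of \gstretches{} of the suffix $I[x_i,\ldots,x_{n-1}]$, minus one. Indeed, $x_i$ belongs to exactly one maximal \gstretch{} $[x_a,\ldots,x_b]$ of $I[X]$; this restricts to the still-maximal last \gstretch{} $[x_a,\ldots,x_i]$ of the prefix and to the first \gstretch{} $[x_i,\ldots,x_b]$ of the suffix, while every other \gstretch{} of $I[X]$ lies entirely within the prefix or entirely within the suffix. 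Conversely, because all constraints of \seqbin{} are binary and bear on consecutive variables only, any $\cstrBin$-coherent instantiation of $[x_0,\ldots,x_i]$ assigning $v$ to $x_i$ and any $\cstrBin$-coherent instantiation of $[x_i,\ldots,x_{n-1}]$ assigning $v$ to $x_i$ can be glued at $x_i$ into a $\cstrBin$-coherent instantiation of $X$, whose number of \gstretches{} is, by the identity above, the sum of the two counts minus one.

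Next I would pin down which prefix and suffix counts are attainable. By Notation and Lemmas~\ref{lem:Bmin}--\ref{lem:Bmax}, the minimum and maximum numbers of \gstretches{} of $[x_i,\ldots,x_{n-1}]$ under $x_i=v$ are $\underline{s}(x_i,v)$ and $\overline{s}(x_i,v)$, and symmetrically $\underline{p}(x_i,v)$, $\overline{p}(x_i,v)$ for $[x_0,\ldots,x_i]$. Invoking the Property stated just before Proposition~\ref{prop:consistency} (counting-continuity together with monotonicity of $\cstrBin$ guarantee that \emph{every} integer of $[\underline{s}(x_i,v),\overline{s}(x_i,v)]$ is realised by some instantiation of $[x_i,\ldots,x_{n-1}]$ with $x_i=v$) and its mirror statement for prefixes, the attainable suffix counts are exactly the integers of $[\underline{s}(x_i,v),\overline{s}(x_i,v)]$ and the attainable prefix counts are exactly the integers of $[\underline{p}(x_i,v),\overline{p}(x_i,v)]$. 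Combined with the gluing argument, the set of numbers of \gstretches{} realised by $\cstrBin$-coherent instantiations of $X$ with $x_i=v$ is the sumset of these two integer intervals minus one, that is, exactly the integer interval $[\underline{p}(x_i,v)+\underline{s}(x_i,v)-1,\ \overline{p}(x_i,v)+\overline{s}(x_i,v)-1]$.

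The equivalence then follows. By Definition~\ref{def:ctr}, $v$ is GAC with respect to \seqbin{} if and only if there is a $\cstrBin$-coherent instantiation of $X$ with $x_i=v$ whose number of \gstretches{} lies in $D(N)$; by the previous step this is exactly the condition $[\underline{p}(x_i,v)+\underline{s}(x_i,v)-1,\overline{p}(x_i,v)+\overline{s}(x_i,v)-1]\cap D(N)\neq\emptyset$, which settles $(1)\Leftrightarrow(2)$ in the case where $v$ is $\cstrBin$-coherent. If $v$ is not $\cstrBin$-coherent, then $v$ occurs in no $\cstrBin$-coherent instantiation, hence in no solution, so $(1)$ is false; moreover the quantities $\underline{p}(x_i,v),\overline{p}(x_i,v),\underline{s}(x_i,v),\overline{s}(x_i,v)$ are defined only for $\cstrBin$-coherent values (hypotheses of Lemmas~\ref{lem:Bmin}--\ref{lem:Bmax}, the other domains being stripped of non-$\cstrBin$-coherent values beforehand), so the interval in $(2)$ is empty and $(2)$ is false too.

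I expect the main obstacle to be the additive identity and its exactness: checking that the maximal \gstretch{} through $x_i$ contributes exactly one \gstretch{} to the prefix and exactly one to the suffix (and not more), and that the glued instantiation is genuinely $\cstrBin$-coherent around the junction at $x_i$. The other ingredients are already in place: the ``no holes'' facts come from the Property preceding Proposition~\ref{prop:consistency} (where monotonicity of $\cstrBin$ and counting-continuity are used), and the interval endpoints come straight from Lemmas~\ref{lem:Bmin} and~\ref{lem:Bmax}.
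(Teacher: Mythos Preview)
Your argument is correct and rests on the same two ingredients as the paper's proof: the additive identity (total number of \gstretches{} equals prefix count plus suffix count minus one) and the counting-continuity ``no holes'' Property preceding Proposition~\ref{prop:consistency}. The only organisational difference is that the paper packages the second half as a one-line reduction: it restricts $D(x_i)$ to $\{v\}$, calls the resulting sequence $X_{(i,v)}$, observes that $\underline{s}(X_{(i,v)})=\underline{p}(x_i,v)+\underline{s}(x_i,v)-1$ and $\overline{s}(X_{(i,v)})=\overline{p}(x_i,v)+\overline{s}(x_i,v)-1$, and then invokes Proposition~\ref{prop:consistency} on $X_{(i,v)}$ directly; you instead unroll this reduction by applying the ``no holes'' property separately to the prefix and to the suffix and then taking the sumset of the two integer intervals. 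Both routes arrive at the same interval and the same conclusion; the paper's is slightly shorter, while yours is more explicit (and in fact handles the non-$\cstrBin$-coherent edge case more carefully than the paper does).
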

\begin{proof}
If $v$ is not $\cstrBin$-coherent then, by Definition~\ref{def:ctr}, $v$ is not GAC. 
Otherwise, $\underline{p}(x_i,v)$ (resp.  $\underline{s}(x_i,v)$) is the exact minimum number of \gstretches{} among $\cstrBin$\--coherent instantiations $I[x_0,x_1,\ldots,x_i]$ (resp. $I[x_i,x_{i+1},\ldots,x_{n-1}]$) such that $I[x_i]=v$. Thus, by Lemma~\ref{lem:Bmin}  (and its symmetrical for prefixes), the exact minimum number of $C$-stretches among  $\cstrBin$-coherent
instantiations $I[x_0,x_1,\ldots,x_{n-1}]$ such that $I[x_i]=v$ is $\underline{p}(x_i,v)+\underline{s}(x_i,v)-1$. Let ${\cal D}_{(i,v)} \subseteq {\cal D}$ such that all domains 
in ${\cal D}_{(i,v)}$ are equal to domains in  ${\cal D}$ except $D(x_i)$ which is reduced to $\{v\}$. 
We call $X_{(i,v)}$ the sequence of variables associated with domains in ${\cal D}_{(i,v)}$. 
By construction $\underline{p}(x_i,v)+\underline{s}(x_i,v)-1 = \underline{s}(X_{(i,v)})$. By a symmetrical reasoning,  $\overline{p}(x_i,v)+\overline{s}(x_i,v)-1  = \overline{s}(X_{(i,v)})$. By Proposition~\ref{prop:consistency}, the proposition holds. 
\end{proof}

The ``$-$ $1$" in expressions $\underline{p}(x_i,v)+\underline{s}(x_i,v)-1$ and $\overline{p}(x_i,v)+\overline{s}(x_i,v)-1$ prevents us from counting twice a $\cstrStretch$-stretch at an extremity $x_i$ of the two sequences $[x_0,x_1,\ldots,x_i]$ and  
$[x_i,x_{i+1},\ldots,x_{n-1}]$. 
%%%%%%%%%%%%%%%%%%%%%
\section{GAC Filtering Algorithm}\label{sec:algos}
%{\color{blue}[Modifi\'ee (fin de la section, \`a partir de la property $6$)]}
%{\color{blue}[\`A faire :  il reste \`a virer certaines instances de \CH, montrer une propri\'et\'e voisine \`a la monotonie de $(C \wedge B)$ pour avoir la  propri\'et\'e de counting-continuous pour les autres contraintes (ou bien reprendre chaque contrainte s\'epar\' ement).]} 
Based on the necessary and sufficient filtering condition of Proposition~\ref{prop:consistencyprefix}, this section provides an implementation of the GAC filtering algorithm for a counting-continuous instance of~\seqbin$(N,X,C,B)$~with 
a monotonic constraint $B$. 

If $B \notin \{\leq,\geq,<,>,$\texttt{true}$\}$ then the total ordering $\prec$ entailing monotonicity of $B$ is not the natural order of integers. 
In this case, if $\prec$ is not  known, it is necessary to compute such an ordering with respect to all values in $\cup_{i \in [0,n-1]} (D(x_i))$, once before 
the first propagation of \seqbin. Consider that the two variables of $\cstrBin$ can take any value in $\cup_{i \in [0,n-1]} (D(x_i))$: 
Due to the inclusion of sets of supports of values (see Definition~\ref{def:subst}), the order remains the same when the domains of the variables constrained by $\cstrBin$ do not contain all values in $\cup_{i \in [0,n-1]} (D(x_i))$. 

To compute $\prec$, the following procedure can be used: Count the number of supports of each value, in $O(d^2)$ time (recall $d$ is the maximum domain size 
of a variable in $X$), and sort values according to the number of supports, in $O(|\cup_{i \in [0,n-1]} (D(x_i))| log(|\cup_{i \in [0,n-1]} (D(x_i))|))$ time. \\

Then, given the sequence of variables $X$, the algorithm is decomposed into four phases:
\begin{dingautolist}{192}
\item
Remove all non $\cstrBin$\--coherent values in the domains of $X$.
\item
For all values in the domains of $X$, compute the minimum and maximum number of \gstretches~of prefixes and suffixes.
\item
Adjust the minimum and maximum value of $N$ with respect to the minimum and maximum number of \gstretches{} of $X$.
\item
Using the result phase~\ding{193} and Proposition~\ref{prop:consistencyprefix}, prune the remaining $\cstrBin$\--coherent values.
\end{dingautolist}

With respect to phase~\ding{192}, recall that $\cstrBin$ is monotonic:  According to $\prec$, for any pair of 
variables $(x_i, x_{i+1})$, $\exists v_0$ in $D(x_i)$ such that $\forall v_j\in D(x_i)$, $v_j\neq v_0$, $v_j$ has 
a set of supports on $B(x_i,x_{i+1})$ included in the supports of $v_0$ on $B(x_i,x_{i+1})$. 
By removing from $D(x_{i+1})$ non supports of $v_0$ on $B(x_i,x_{i+1})$ in $O(|D(x_{i+1})|)$, all non $B$\--coherent values of $D(x_{i+1})$ with respect to $B(x_i, x_{i+1})$ are removed. 
%Searching for $v_0$ requires a scan of $D(x_{i+1})$. 
%With respect to $x_i$, we just need to remove from $D(x_i)$ those values having no support on $B(x_i, x_{i+1})$. This can be done by scanning $D(x_i)$. 
By repeating such a process in the two directions (starting from the pair $(x_{n-2},x_{n-1})$ and from the pair $(x_0,x_1$)), all non $B$\--coherent values can be removed from domains in $O(\sumD)$ time complexity. 

To achieve phase~\ding{193} we use Lemmas~\ref{lem:Bmin} and ~\ref{lem:Bmax} and their symmetrical formulations for prefixes. 
Without loss of generality, we focus on the minimum number of $\cstrStretch$-stretches of a value $v_j$ in the domain of a variable $x_i$, $i<n-1$, thanks to Lemma~\ref{lem:Bmin}. Assume that for all $w \in D(x_{i+1})$, $\underline{s}(x_{i+1},w)$ has been computed. If there is no particular property on $\cstrStretch$, the supports $S_j \in D(x_{i+1})$ of $v_j$ on $\cstrStretch(x_i, x_{i+1}) \wedge \cstrBin(x_i, x_{i+1})$ and the subset $\neg S_j \in D(x_{i+1})$ of non-supports  of $v_j$ on $\cstrStretch(x_i, x_{i+1})$ which satisfy $\cstrBin$  have to be scanned, in order to determine for each set a value $w \in S_j$ minimizing $\underline{s}(x_{i+1},w)$ and a value $w' \in \neg S_j$ minimizing $\underline{s}(x_{i+1},w')+1$. This process takes $O(|D(x_{i+1})|)$ for each value, leading to $O(d^2)$ for the whole domain. Since all the variables need to  be scanned and for all the values in domains the quantities are stored, phase~\ding{193} takes $O(nd^2)$ in time, and $O(\sumD)$ in space. 

Phases~\ding{194} and~\ding{195} take $O(\sumD)$ time each since all the domains have to be scanned. By Proposition~\ref{prop:consistencyprefix}, all the non-GAC values have been removed after this last phase.  \\

If $B \in \{\leq,\geq,<,>,$\texttt{true}$\}$, $\prec$ is known. The worst-case time and space results come from Phase~\ding{193}.  The bottleneck stems from the fact that, when a domain $D(x_i)$ is scanned, the minimum and maximum 
number of $C$-stretches of each value are computed from scratch, while an incremental computation would avoid to scan $D(x_{i+1})$ for each value in $D(x_i)$. 
This observation leads to Property~\ref{prop:complexity}. 
Again, we focus on the minimum number of $\cstrStretch$-stretches on suffixes. Other cases are symmetrical. 

\begin{notation}\label{not:2}
Given \seqbin$(N,X,\cstrStretch,\cstrBin)$, $x_i \in X$, $0 \leq i < n$ and a value $v_j \in D(x_i)$,  
if $i<n-1$, let $V_j$ denote 
the set of integer values such that a value $s(v_j,w) \in V_j$ corresponds to each $w \in D(x_{i+1})$ and is equal to: 
\begin{itemize}
\item $\underline{s}(x_{i+1},w)$ if and only if $w \in  S_j$ 
\item $\underline{s}(x_{i+1},w)+1$ if and only if $w \in \neg S_j$
\end{itemize} 
%\begin{align}
%	s(j,w) = \left\{
%	\begin{array}{ll}
%		\underline{s}(x_{i+1},w) & \forall w \in S_j \\
%		\underline{s}(x_{i+1},w)+1 & \forall w \in \neg S_j	
%	\end{array}
%	\right.
%	\nonumber
%\end{align}
\end{notation}
Within notation~\ref{not:2}, the set $V_j$ corresponds to the minimum number of stretches 
of values in $D(x_{i+1})$ increased by one if they are non supports of value $v_j$ with respect to $\cstrStretch$.

\begin{property}\label{prop:complexity}
Given a counting-continuous instance of~\seqbin$(N,X,\cstrStretch,\cstrBin)$ such that $B \in \{\leq,\geq,<,>,$\texttt{true}$\}$ and $x_i \in X$, $0 \leq i < n-1$,  
if the computation of $\min_{w \in D(x_{i+1})} (s(v_j,w))$ for all $v_j \in D(x_i)$  can be performed in $O(|D(x_{i+1})|)$ time 
then GAC can be achieved on \seqbin~in $O(\sumD)$ time and  space complexity. 
\end{property}
\begin{proof} Applying Lemma~\ref{lem:Bmin} to the whole domain $D(x_i)$ takes $O(|D(x_{i+1})|)$ time. Storing the minimum number 
of stretches for each value in $D(x_i)$ requires $O(|D(x_i)|)$ space.
Phase~\ding{193} takes $O(\sumD)$ space and $O(\sumD)$ time. 
\end{proof}

When they are represented by a counting-continuous instance of~\seqbin, the practical constraints mentioned in the introduction satisfy a condition  
that entails Property~\ref{prop:complexity}: Given $x_i$, it is possible to compute in $O(|D(x_{i+1})|)$ the quantity $\min_{w \in D(x_{i+1})}(s(v_0,w))$ for a 
 first value $v_0 \in D(x_i)$ and then,  following the natural order of integers, to derive with a constant or amortized time complexity the quantity for the next value $v_1$, and then the quantity for the next value $v_2$, and so on. 
Thus, to obtain GAC in $O(\sumD)$ for all these constraints, we specialize Phase~\ding{193} in order to exploit such a property. We now detail how to proceed. 

With respect to the constraints mentioned in the introduction corresponding to instances  of~\seqbin~which are not counting-continuous, the same time complexity can be reached but the algorithm does not enforces GAC. 

Thus, when \seqbin~represents \CH, \SM
~or \LNV, computing $\min_{w \in D(x_{i+1})}(s(v_0,w))$ for the minimum value $v_0 = \min(D(x_i))$ (respectively the maximum value) can be performed by scanning 
the minimum number of $\cstrStretch$-stretches of values in $D(x_{i+1})$. 

We now study for \CH, \SM
~and \LNV~how to efficiently compute the value $\min_{w \in D(x_{i+1})} (s(v_k,w))$ of $v_k \in D(x_i)$, either directly or from the previous value $\min_{w \in D(x_{i+1})} (s(v_{k-1},w))$,   in order to compute 
$\min_{w \in D(x_{i+1})} (s(v_j,w))$ for all $v_j \in D(x_i)$  in $O(|D(x_i)|)$ time and therefore 
achieve Phase~\ding{193} in $O(\sumD)$. 

%We first consider instances of \seqbin~where $\cstrBin$ is \texttt{true}. 
%\begin{itemize}
%\item {\CH$(N,X,K)$} 
\subsubsection{The \CH{} constraint}

Section~\ref{def:constraintClass} showed a reformulation of \CH$(N,X,\mathit{CTR})$ as \seqbin$(N',X,C,$\texttt{true}$) \wedge [N = N'-1]$, where $\cstrStretch$ is the opposite of $\mathit{CTR}$. 

%$\--$ If $\cstrStretch$ is `$=$' then for each $v_j \in D(x_i)$ there is a unique potential support for $\cstrStretch$ on $x_{i+1}$, the value $v_j$. 
%Therefore, by memorizing once the value $\mathit{vmin}_1$ in $D(x_{i+1})$ which corresponds to the smallest minimum numbers of $\cstrStretch$-stretches on the 
%suffix starting at $x_{i+1}$: $\forall v_j$,  $\min_{w \in D(x_{i+1})} (s(v_j,w))$ $=$ $\min(\underline{s}(x_{i+1}, v_j),\underline{s}(x_{i+1},\mathit{vmin}_1)+1)$, assuming $\underline{s}(x_{i+1}, v_j)=+\infty$ 
%when $v_j \notin D(x_{i+1})$. 
%
%$\--$ If $\cstrStretch$ is `$\neq$' then for each $v_j \in D(x_i)$ there is a single non support. By memorizing the two values $\mathit{vmin}_1$ and $\mathit{vmin}_2$ 
%which minimize the minimum numbers of $\cstrStretch$\nobreakdash-stretches on the 
%suffix starting at $x_{i+1}$,  for any value $v_j$ $\min_{w \in D(x_{i+1})} (s(v_j,w))$ is equal to: $\min(\underline{s}(x_{i+1},\mathit{vmin}_1)+1, \underline{s}(x_{i+1},\mathit{vmin}_2))$ when $\mathit{vmin}_1 = v_j$, 
%and $\underline{s}(x_{i+1},\mathit{vmin}_1)$ otherwise.

$\--$ If $\cstrStretch$ is `$>$' (the principle is similar for `$\leq$','$\geq$' and '$<$'), the instance of \seqbin~is counting-continuous, because $B$ is $\texttt{true}$ and $C$ is monotonic. 
The monotonicity of $C$, with its corresponding total ordering $\prec$, guarantees that given three consecutive variables $x_{i-1}, x_i, x_{i+1}$ and $v_1 \in D(x_{i-1}), v_2 \in D(x_{i}), v_3 \in D(x_{i+1})$, if $(v_1,v_2)$ and $(v_2, v_3)$ both violate $C$, then we necessarily have $v_1 \succ v_2 \succ v_3$. 
Therefore, changing value $v_2$ by a new value $v'_2$ such that $v_1 \prec v'_2$ (to satisfy $C$) entails $v'_2 \succ v_2$, and thus still $v'_2 \succ v_3$ (which violates $C$). 
It is not possible to remove (or, symmetrically, to add) two violations of $C$ within an instantiation only by changing the value of one variable. 
The instance of \seqbin~is counting-continuous and thus the algorithm enforces GAC (by Proposition~\ref{prop:consistencyprefix}). 

To achieve step 3. in $O(D(x_i))$, we introduce two quantities $\mathit{lt}(v_j,x_{i+1})$ and $\mathit{geq}(v_j,x_{i+1})$  respectively equal to $\min_{w \in [\min(D(x_i)), v_j[}(\underline{s}(x_{i+1}, w))$ 
and $\min_{w \in  [v_j,\max(D(x_i))]}(\underline{s}(x_{i+1}, w))$. The computation is performed in three steps: 
\begin{enumerate}
\item Starting from $v_0 = \min(D(x_i))$, that is, the value having the smallest number of supports for $\cstrStretch$ on $x_{i+1}$, compute $\mathit{lt}(v_j,x_{i+1})$ in increasing order of $v_j$.  
Taking advantage that, given a value $v_{j-1} \in D(x_i)$ and the next value $v_{j} \in D(x_i)$, $[\min(D(x_i)), v_{j-1}[$ is included in $[\min(D(x_i)), v_{j}[$. 
Therefore, the computation of all $\min_{w \in [\min(D(x_i)), v_j[}(\underline{s}(x_{i+1}, w))$ can be amortized over $D(x_{i+1})$. 
The time complexity for computing $\mathit{lt}(v_j,x_{i+1})$ for all $v_j \in D(x_i)$ is in $O(|D(x_i)|+|D(x_{i+1})|)$. 
\item Similarly starting from $v_0 = \max(D(x_i))$,  compute incrementally $\mathit{geq}(v_j,x_{i+1})$ in decreasing order of $v_j$, in $O(|D(x_i)|+|D(x_{i+1})|)$. 
\item Finally, for each $v_j \in D(x_i)$,  $\min_{w \in D(x_{i+1})} (s(v_j,w))$ is equal to $\min(\mathit{lt}(v_j,x_{i+1}), \mathit{geq}(v_j,x_{i+1})+1)$. 
\end{enumerate}
Since step 3. takes $O(D(x_i))$, we get an overall time complexity for Phase~\ding{193} in $O(\sumD)$. 

$\--$ If $C$ is '$=$', '$\neq$', or $|x_i-x_{i+1}| \leq \mathit{cst}$ (the latter case corresponds to the \SM{} constraint), the filtering algorithm does not guarantees GAC because the corresponding instances of \seqbin~are not counting-continuous. Step 3. can also be performed in $O(D(x_i))$, leading to an overall time complexity for Phase~\ding{193} in $O(\sumD)$: 
\begin{itemize}
\item If $\cstrStretch$ is `$=$'  then 
for each $v_j \in D(x_i)$ there is a unique potential support for $\cstrStretch$ on $x_{i+1}$, the value $v_j$. 
Therefore, by memorizing once the value $\mathit{vmin}_1$ in $D(x_{i+1})$ which corresponds to the smallest minimum numbers of $\cstrStretch$-stretches on the 
suffix starting at $x_{i+1}$: $\forall v_j$,  $\min_{w \in D(x_{i+1})} (s(v_j,w))$ $=$ $\min(\underline{s}(x_{i+1}, v_j),\underline{s}(x_{i+1},\mathit{vmin}_1)+1)$, assuming $\underline{s}(x_{i+1}, v_j)=+\infty$ 
when $v_j \notin D(x_{i+1})$. 
\item If $\cstrStretch$ is `$\neq$' then for each $v_j \in D(x_i)$ there is a single non support. By memorizing the two values $\mathit{vmin}_1$ and $\mathit{vmin}_2$ 
which minimize the minimum numbers of $\cstrStretch$\nobreakdash-stretches on the 
suffix starting at $x_{i+1}$,  for any value $v_j$ $\min_{w \in D(x_{i+1})} (s(v_j,w))$ is equal to: $\min(\underline{s}(x_{i+1},\mathit{vmin}_1)+1, \underline{s}(x_{i+1},\mathit{vmin}_2))$ when $\mathit{vmin}_1 = v_j$, 
and $\underline{s}(x_{i+1},\mathit{vmin}_1)$ otherwise. 
\item \SM~is a variant of \CH$(N,X,\mathit{CTR})$, where $x_i$\,$\mathit{CTR}$\,$x_{i+1}$ is $|x_i-x_{i+1}| > \mathit{cst}$, $\mathit{cst} \in \mathbb{N}$, that can be reformulated as \seqbin$(N',X,C,$\texttt{true}$) \wedge [N = N'-1]$,
where $C$ is $|x_i-x_{i+1}| \leq \mathit{cst}$.
Assume $v_0 = \min(D(x_i))$ and we scan values in increasing order.
Supports of values in $D(x_i)$ for $|x_i-x_{i+1}|\leq\mathit{cst}$
define a set of sliding windows for which both the starts and the ends are increasing sequences (not necessarily strictly).
Thus, $\min_{w \in S_j} (s(v_j,w))$ can be computed for all $v_j$ in $D(x_i)$ in $O(|D(x_i)|)$ thanks to the
\emph{ascending minima algorithm}.\footnote{See http://home.tiac.net/{\small$\sim$}cri/2001/slidingmin.html} 
Given a value $v_j\in D(x_i)$ the set $\neg S_j$ of non supports of $v_j$ on $|x_i-x_{i+1}|\leq\mathit{cst}$
is partitioned in two sequences of values: a first sequence before the smallest support and a second sequence
after the largest support. While scanning values in $D(x_i)$ these two sequences correspond also to sliding windows
on which the ascending minima algorithm can also be used.
\end{itemize}

%%\item{\A} 
%\subsubsection{The \A{} constraint}
%It is represented by \seqbin$(N,X,$\notinbin$({\cal V}),$\texttt{true}). By memorizing 
%the value $w$ in $D(x_{i+1})$ which minimize $\underline{s}(x_{i+1},w)$ and $\min_1$ such a minimum of $\underline{s}(x_{i+1},w)$, $\forall v_j \in D(x_i)$ 
%$\min_{w \in D(x_{i+1})} (s(j,w))$ is equal to $\min_1$ if $S_j \neq \emptyset$ ($v_j$ satisfies \notinbin$({\cal V}))$, and $\min_1+1$ otherwise.  
%\end{itemize}

%With respect to the constraint {\LNV}, the corresponding instance is \seqbin$(N,X,=,\leq)$.
\subsubsection{The \LNV{} constraint}
It is represented by \seqbin$(N,X,=,\leq)$, which is counting-continuous (see~\cite{BeldiceanuHermenierLorcaPetit10} for more details). The algorithm enforces GAC.
Since $\cstrBin$ is not \texttt{true}, we have to take into account  $\cstrBin$ when evaluating $\min_{w \in D(x_{i+1})} (s(j,w))$ for each $v_j \in D(x_i)$. Fortunately, we can start from $v_0 = \max(D(x_i))$ and consider the decreasing order since $\cstrBin$ is `$\leq$'. In this case the set of supports on $B$ can only increase as
we scan $D(x_i)$. 
$\cstrStretch$ is `$=$', then for each $v_j \in D(x_i)$ there is a unique potential support for $\cstrStretch$ on $x_{i+1}$, the value $v_j$. 
We memorize once the value $\mathit{vmin}_1$ in $D(x_{i+1})$ which corresponds to the smallest minimum numbers of $\cstrStretch$-stretches on the 
suffix starting at $x_{i+1}$, {\em only on supports of the current value $v_j\in D(x_i)$ on $B$}. $\forall v_j$,  $\min_{w \in D(x_{i+1})} (s(v_j,w))$ $=$ $\min(\underline{s}(x_{i+1}, v_j),\underline{s}(x_{i+1},\mathit{vmin}_1)+1)$, assuming $\underline{s}(x_{i+1}, v_j)=+\infty$ 
when $v_j \notin D(x_{i+1})$. 
Since the set of supports on $B$ only increases,
 $\mathit{vmin}_1$ can be updated for each new value in $D(x_i)$ in $O(1)$.

%%%%%%%%%%%%%%%%%%%%%%%%%%%%%%%%%%%%%%%%%%
\section{Related Work} \label{relatedWorks}
%{\color{blue}[Modifi\'ee]} 
Using automata, \CH%~and \SM
~can be represented either by \RE~\cite{Pesant04} or by \CRE~\cite{DemasseyPR06}.
In the first case this leads to a GAC algorithm in $O(n^2d^2)$ time~\cite[pages 584--585, 1544--1545]{beldiceanu10}
(where $d$ denotes the maximum domain size).
In the second case the filtering algorithm of \CRE~does not achieve GAC.

%Regarding the literature, \seqbin~can be encoded thanks to the \SL~meta-constraint~\cite{BessiereHebrardHnichKiziltanWalsh08}. We show that, 
%in this case, enforcing GAC requires $O(nd^3)$, where $n$ is the size of $X$ and $d$ the maximum domain size, while time complexity of our algorithm  is $O(\sumD)$ 
%which is upper-bounded by $O(nd)$. 
Bessière \emph{et al.}~\cite{BessiereHebrardHnichKiziltanWalsh08} presented an encoding of the \CP~constraint with \SL$_2$. A similar reformulation can be used for encoding \seqbin$(N,X,\cstrStretch,\cstrBin)$. Recall that \SL$_j(C,[x_0, x_1,\ldots, x_{n-1}])$ holds if and only if $C(x_{ij}, \ldots, x_{ij+k-1})$ holds for $0\leq i \leq \frac{n-k}{j}$. Following a schema similar to the one proposed in Section 4 of Bessière \emph{et al.} paper, \seqbin$(N,X,\cstrStretch,\cstrBin)$ can be represented by adding a variable $N'$ and $n$ variables $[M_0, \ldots, M_{n-1}]$, with $M_0 = 0$ and $M_{n-1}=N'$. \seqbin$(N,X,\cstrStretch,\cstrBin)$ is then reformlated by
%\SL$_2(C', [M_0,x_0, M_1, x_1, \ldots, M_{n-1},  x_{n-1}]) \wedge [[N' \neq 0 \wedge N'=N-1] \vee  [N' = 0 \wedge N=1]] $,
\SL$_2(C', [M_0,x_0, M_1, x_1, \ldots, M_{n-1},  x_{n-1}]) \wedge [N'=N-1] $,
where $C' = [\neg C(x_i, x_{i+1}) \wedge B(x_i, x_{i+1}) \wedge M_{i+1}=M_i+1] \vee [C(x_i, x_{i+1}) \wedge B(x_i, x_{i+1}) \wedge M_{i+1}=M_i]$. According to Section 6 of Bessière \emph{et al.} paper, GAC can be achieved thanks to a reformulation of \SL$_2$, provided a complete propagation is performed on $C'$, which is the case because $B(x_i, x_{i+1})$ and $C(x_i, x_{i+1})$ involve the same variables. The reformulation requires $n$ additional intersection variables (one by sub-sequence $[M_i, x_i]$), on which $O(n)$ compatibility constraints between pairs of intersection variables and $O(n)$ functional channelling constraints should hold. Arity of $C'$ is $k=4$ and $j=2$: the domain of an intersection variable contains $O(d^{k-j}) = O(d^2)$ values (corresponding to binary tuples), where $d$ is the maximum size of a domain. Enforcing GAC on a compatibility constraint takes $O(d^3)$ time, while functional channelling constraint take $O(d^2)$, leading to an overall time complexity $O(nd^3)$ for enforcing arc-consistancy on the reformulation, corresponding to GAC for \seqbin. To compare such a time complexity $O(nd^3)$ with our algorithm, note that $O(\sumD)$ is upper-bounded by $O(nd)$.

At last, some \emph{ad hoc} techniques can be compared to our generic GAC algorithm, \emph{e.g.}, a GAC algorithm
in $O(n^3 m)$ for \CH~\cite[page 57]{Hellsten04}, where $m$ is the total number of values in the domains of $X$.
Moreover, the GAC algorithm for \seqbin~generalizes to a class of counting constraints
the ad-hoc GAC algorithm for \LNV~\cite{BeldiceanuHermenierLorcaPetit10}
without degrading time and space complexity in the case where \seqbin~represents \LNV.

%%%%%%%%%%%%%%%%%%%%%%%%%%%%%%%%%%%%%%%%%%
%%%%%%%%%%%%%%%%%%%%%%%%%%%%%%%%%%%%%%%%%%
\section{Conclusion} \label{conclusion}
Our contribution is a structural characterization of a class of counting constraints 
for which we come up with a general polytime filtering algorithm achieving GAC under some conditions, 
and a characterization of the property which makes such an algorithm linear in the sum of domain sizes. 
A still open question is whether it would be possible or not to extend this class
(\emph{e.g.},~considering $n$\nobreakdash-ary constraints for \cstrBin~and \cstrStretch)
without degrading complexity or giving up on GAC,
in order to capture more constraints.

\bibliographystyle{named}
\bibliography{Inductive}
\end{document}